\documentclass[letterpaper, 10 pt, conference]{ieeeconf}
\usepackage{cite}
\usepackage{graphicx}
\usepackage{longtable,tabularx}
\usepackage{caption}
\usepackage{booktabs}
\usepackage{authblk}
\usepackage{caption}
\usepackage{algorithm}
\usepackage{algpseudocode}
\usepackage{multirow}
\usepackage{balance}
\usepackage{amsmath,amsfonts}
\usepackage{array}
\usepackage[caption=false,font=normalsize,labelfont=sf,textfont=sf]{subfig}
\usepackage{textcomp}
\usepackage{stfloats}
\usepackage[T1]{fontenc}
\usepackage{longtable}
\usepackage{xhfill}
\usepackage{booktabs}
\usepackage{makecell}
\usepackage{scalerel}

\pdfminorversion=4

\def\p(#1|#2){p(#1\,|\,#2)}
\def\q(#1|#2){q(#1\,|\,#2)}

\usepackage{xcolor}

\newtheorem{theorem}{Theorem}

\newcommand{\norm}[1]{\left\lVert#1\right\rVert}

\algnewcommand{\Inputs}[1]{%
  \State \textbf{Inputs:} 
   \hspace*{0.3em}\parbox[t]{\linewidth}{\raggedright #1}
}
\algnewcommand{\Initialize}[1]{%
  \State \textbf{Initialize:}
   \hspace*{0.3em}\parbox[t]{.8\linewidth}{\raggedright #1}
}
\algnewcommand{\Output}[1]{%
  \State \textbf{Outputs:}
   \hspace*{0.3em}\parbox[t]{.8\linewidth}{\raggedright #1}
}

\usepackage{xspace}

\newcommand{\EnKMP}{{\tt Ensemble Kalman Motion Planner}\xspace}

\begin{document}

\title{\bf{Motion Planning for Autonomous Vehicles: When Model Predictive Control Meets Ensemble Kalman Smoothing}}
\IEEEoverridecommandlockouts
\author[1]{Iman Askari\textsuperscript{1}, Yebin Wang\textsuperscript{2}, Vedang M. Deshpande\textsuperscript{2}, and Huazhen Fang\textsuperscript{1}
\thanks{This work was sponsored in part by the U.S. Army Research Laboratory under Cooperative Agreement Number W911NF-22-2-0207.}
\thanks{\textsuperscript{1}I. Askari and H. Fang are with the Department of Mechanical Engineering, University of
Kansas, Lawrence, KS 66045, USA. (Email: \tt\small \{askari,fang\}@ku.edu)}
\thanks{\textsuperscript{2}Y. Wang, and V. M. Deshpande are with the Mitsubishi Electric Research Laboratories, Cambridge, MA 02139, USA. (Email: \tt \small \{yebinwang,deshpande\}@merl.com)}

}

\maketitle

\begin{abstract} 

Safe and efficient motion planning is of fundamental importance for autonomous vehicles. This paper investigates motion planning based on nonlinear model predictive control (NMPC) over a neural network vehicle model. We aim to overcome the high computational costs that arise in NMPC of the neural network model due to the highly nonlinear  and  nonconvex optimization.  In a departure from numerical optimization solutions, we reformulate the problem of NMPC-based motion planning as a Bayesian estimation problem, which seeks to infer optimal planning decisions from planning objectives. Then, we use a sequential ensemble Kalman smoother to accomplish the estimation task, exploiting its high computational efficiency for complex nonlinear systems. The simulation results show an improvement in computational speed by orders of magnitude, indicating the potential of the proposed approach for practical motion planning.
\end{abstract} 

\section{Introduction} 
\label{introduction}

Autonomous driving technologies have gained rapid advances in the past decade, showing significant promise in improving safety in transportation and access to mobility for diverse groups of people~\cite{Woldeamanuel:RTE:2018}. Motion planning is central to an autonomous vehicle, which is responsible for identifying the trajectories and maneuvers of the vehicle from a starting configuration to a goal configuration,   while avoiding potential collisions and minimizing certain costs. 

Various motion planning approaches have been proposed, ranging from graph-based methods such as Dijkstra and A* search~\cite{Dijkstra:NM:1959, Hart:TSSC:1968} to sample-based approaches such as   rapidly-exploring random trees and probabilistic roadmaps~\cite{LaValle:ARR:1998, Kavraki:TRA:1996}. However, they are often only suitable for small configuration spaces since their search times will increase exponentially with the dimension of the configuration space. 
Nonlinear model predictive control (NMPC) has emerged as a powerful alternative. By design, NMPC predicts the future behavior of a dynamic system and then finds the best control actions to optimize certain performance indices under input and state constraints. When applied to autonomous vehicles, NMPC can provide two crucial advantages~\cite{Turri:ITSC:2013, Murgovski:CDC:2015, Ji:TVT:2017, Askari:ACC:2021, Askari:ACC:2022}. First, it can continuously and predictively optimize motion plans to handle driving in a changing environment over a receding time horizon.   Second, NMPC can incorporate the vehicle's dynamics, physical limitations, and safety requirements into the online planning procedure, thus improving the safety and feasibility of planning. 
  
The success of NMPC-based motion planning depends on the accuracy of the vehicle's dynamic model. While physics-based vehicle modeling has been in wide use, practitioners often find it challenging to obtain accurate ones~\cite{Draeger:CSM:1995, Piche:NIPS:1999}. Physical models are unable to capture the full range of various uncertain factors acting on a vehicle and often take a long time to develop due to the manual effort of theoretical analysis to experimental calibration. Data-driven machine learning has thus risen in recent years to enable vehicle modeling~\cite{Williams:TRO:2018}. In particular, neural networks have found ever-increasing applications. The popularity of neural networks traces to their universal function approximation properties and their capabilities of using history information to grasp high-order or time-varying effects~\cite{Spielberg:TCST:2022,Hermansdorfer:CIST:2020,Spielberg:SR:2019,Draeger:CSM:1995}.

However, NMPC of neural network models is non-trivial in terms of computational complexity. This is because NMPC must solve a constrained optimization problem in receding horizons. The implementation widely adopts gradient-based optimization solvers, but they struggle to perform a computationally efficient search for optima in the face of the nonlinear nonconvex neural network dynamics~\cite{Nagabandi:ICRA:2018, Draeger:CSM:1995}. The literature has presented some methods to mitigate the issue. A specially structured neural network is developed in~\cite{Broad:arXiv:2018} to facilitate the network's integration with gradient-based NMPC, and an input-convex neural network is used in~\cite{Bunning:PMLR:2021} to avoid nonconvex NMPC, which though comes at a sacrifice of the representation fidelity. The study in~\cite{Nagabandi:ICRA:2018} exploits a random shooting technique for NMPC of neural networks in reinforcement learning tasks.   

We have recently investigated the use of Bayesian estimation tools to solve NMPC of neural networks efficiently. In~\cite{Askari:ACC:2021}, we reformulate an NMPC problem as a Bayesian estimation problem that seeks to infer optimal control actions. Based on the reformulation, we use particle filtering/smoothing to address the estimation problem, exploiting its ability  to perform  sampling-based computation for fast search. Our further study in~\cite{Askari:ACC:2022} applies the method to motion planning based on a neural network vehicle model.
While our prior study shows the promise of using Bayesian estimation to address NMPC, the proposed approach has to perform forward filtering and then backward smoothing to estimate the optimal control actions. Also, we use the bootstrap particle filtering/smoothing technique in~\cite{Askari:ACC:2022}, which is easy to implement but still requires a large number of samples to achieve decent accuracy. These factors would limit the computational performance.  We thus turn our attention to the ensemble Kalman filtering/smoothing.  This technique shares a similar structure with the standard Kalman filtering/smoothing but adopts Monte Carlo sampling-based implementation~\cite{Fang:JAS:2018}.  Its significant computational efficiency and suitability to high-dimensional nonlinear systems  make  it an appealing alternative to solve NMPC within the Bayesian estimation framework.  We will use a sequential ensemble Kalman smoother (EnKS) characterized by a single forward-pass implementation~\cite{Evensen:MWR:2000}. As a recursive forward algorithm, this smoother avoids the two-pass smoothing, i.e., forward filtering and backward smoothing as in~\cite{Askari:ACC:2022}, to further speed up the computation. This will  considerably facilitate motion planning, for which fast, real-time computation is crucial. To sum up, the core contribution of this study lies in developing a novel EnKS-based NMPC approach for autonomous vehicle motion planning. We also present various simulation results to validate the proposed approach.

The rest of the paper is organized as follows. Section~\ref{MP-Formulation} presents the NMPC-based motion planning problem.  Section~\ref{sec:estimation-NMPC}   then reformulates NMPC as a Bayesian estimation problem and develops the EnKS-based motion planning approach. Section~\ref{numerical-sim}  offers simulation results to validate the proposed approach. Finally, Section~\ref{Sec:Conclusion} concludes the paper.
\section{NMPC Motion Planning Problem Formulation}\label{MP-Formulation}

To perform model-based motion planning, we use neural networks to capture the ego vehicle (EV) dynamics. Compared to first-principles vehicle modeling, neural networks are able to learn complex and non-transparent vehicle dynamics from abundant data and show excellent predictive accuracy even in the presence of uncertainty~\cite{Spielberg:SR:2019}. We thus consider a neural network-based vehicle model as follows:
\begin{equation}\label{NN-dynamics}
\dot{x}_{k} = f_{\mathrm {NN}}(x_k,u_k),
\end{equation}
where $f_{\mathrm{NN}}(\cdot)$ represents a feedforward neural network, $x_k$ is the state vector, and $u_k$ is the control input vector.  In above, $x_k = \begin{bmatrix}x_k^p &  y_k^p & \psi_k & \nu_k\end{bmatrix}^{\top}$, where $ \left(x_k^p,\, y_k^p\right)$ is the EV's position in the global coordinates, $\psi_k$ is the heading angle, and $\nu_k$ is the speed; $u_k=\begin{bmatrix}a_k & \delta_k\end{bmatrix}^{\top}$, where $a_k$ is the vehicle's acceleration, and $\delta_k$ is the front wheel's steering angle. We discretize the model in~\eqref{NN-dynamics} as below for the purpose of computation in motion planning:
\begin{align}\label{nn-dynamics-discrete}
x_{k+1} = f(x_k, u_k) = x_k + \Delta t \cdot f_{\mathrm{NN}}(x_k, u_k),
\end{align}
where $\Delta t$ is the sampling period.

The EV should never collide with its surrounding obstacle vehicles (OVs). At time $k$, we denote the state of the EV as $x_k^{\mathrm{EV}}$ and the state of the $i$-th OV as $x_k^{\mathrm{OV}, i}$ for $i=1,\ldots,M_{O}$, where $M_{O}$ is the number of OVs. To prevent collisions, the EV must maintain a safe distance from all OVs. This requirement can be expressed as:
\begin{equation}\label{collision}
d\left(x_k^{\mathrm{EV}}, x_k^{\mathrm{OV}, i}\right) \geq d_{\min}, \quad i=1,\ldots,M_O,
\end{equation}
where $d\left(\cdot, \cdot\right)$ represents the distance between the vehicles characterized by their shape, and $d_{\min}$ is the minimum safe distance. The EV is assumed to be driven on a structured road and expected to stay within the road's boundaries. This requirement is met by imposing the following constraint:
\begin{equation} \label{distance-to-track}
d_B(x_k)\leq L,
\end{equation}
where $d_B(x_k) =  \norm{x_k-B}_2$, $B$ is the closest point from the road boundary to the EV, and $L$ is the road's width. The EV's acceleration and steering control inputs are limited due to both the actuation limits and the need to ensure passenger comfort. This implies
\begin{equation}\label{control-bound}
u_{\textrm{min}}\leq u_k \leq u_{\textrm{max}},
\end{equation}
where $u_{\textrm{min}}$ and $u_{\textrm{max}}$ are the lower and upper control bounds. To simplify the notation, we summarize the planning constraints~\eqref{collision}-\eqref{control-bound} compactly as follows:
\begin{align}\label{inequality-constraints}
g ({x}_k,u_{k})\leq 0. 
\end{align}

With the above formulation, we are now ready to formulate the NMPC motion planning problem. The problem setup should not only include the above constraints but also reflect the general driving requirements. We expect the EV to track a nominal reference path, penalize actuation costs, and comply with the safety constraints. The corresponding NMPC problem is stated as follows:
\begin{subequations} \label{NMPC-Standard}
\begin{align}
\min \quad &  \sum_{t=k}^{k+H}
   (x_t - r_t)^{\top}R(x_t - r_t) +  u_t^{\top}Qu_t,\\
\mathrm{s.t.} \quad & ~\eqref{nn-dynamics-discrete}, \eqref{inequality-constraints},  \quad t = k, \ldots, k+H,
\end{align}
\end{subequations}
where $H$ is the prediction planning horizon length, $Q$ and $R$ are weight matrices, and collectively the optimization variables are $\left\{x_{k:k+H}, \; u_{k:k+H} \right\}$, where $x_{k:k+H} = \left\{ x_k, \ldots, x_{k+H}\right\}$. The reference path $r_{k:k+H}$ is composed of a sequence of waypoints given by a higher decision-making module. The motion planner solves the problem in~\eqref{NMPC-Standard} at every time $k$ to obtain  $u_{k:k+H}^{\ast}$  and applies the first element $u_k^{\ast}$ to drive the EV to the next state $x_{k+1}$. The planner then repeats the procedure in receding horizons. 

Traditional solvers for the problem in~\eqref{NMPC-Standard} are based on gradient-based optimization. However, they would struggle to compute solutions efficiently here.  The neural network-based model will make the optimization landscapes extremely nonlinear and nonconvex, to hinder efficient search. As will be shown in Section~\ref{numerical-sim}, the computational time by gradient-based optimization is almost formidable. Therefore, we will pursue an alternative method  that is  more competent in computation, which translates the NMPC problem in~\eqref{NMPC-Standard} into a Bayesian estimation problem and then harnesses the highly efficient EnKS to perform the estimation.

\section{Motion Planning via EnKS-Based NMPC}\label{sec:estimation-NMPC}
 
This section presents the main results, showing the development of an EnKS-based NMPC for motion planning. 
 
\subsection{Setup of NMPC as Bayesian Estimation}\label{sec:esimation-NMPC-setup}

 Given the NMPC problem in~\eqref{NMPC-Standard}, we can view the control objective as evidence and then infer the best control actions given the evidence. This perspective motivates us to set up a Bayesian estimation problem with equivalence to~\eqref{NMPC-Standard}. To show the estimation problem, we begin by considering the following virtual system: 
\begin{subequations}\label{Virtual-System}
\begin{align}
x_{t+1} &= f(x_t, u_t), \label{dynamic-eq}\\
u_{t+1} &=  w_t,\\
r_t &= x_t+ v_t,
\end{align}
\end{subequations}
for $k \leq t \leq k+H$, where $w_t$ and $v_t$ are additive noises. For now, it is our interest to consider the state estimation problem for~\eqref{Virtual-System} from the viewpoint of probabilistic inference. Following the principle of maximum a posterior estimation, the following problem is of our interest: 
\begin{align}\label{MAP-Formulation}
\max_{x_{k:k+H}, u_{k:k+H}} \log \p(x_{k:k+H}, u_{k:k+H}   |  r_{k:k+H}).
\end{align}
This estimation problem holds an equivalence to the NMPC problem in~\eqref{NMPC-Standard} under some conditions~\cite{Askari:ACC:2022}. 
\begin{theorem}
Assume that $w_t$ and $v_t$ are independent white noises with
$
w_t \sim \mathcal{N}(0,Q^{-1})$ and $v_t \sim \mathcal{N}(0, R^{-1})$.
 Then, the problems in~\eqref{NMPC-Standard} and~\eqref{MAP-Formulation} have the same optima if neglecting~\eqref{inequality-constraints}. 
\end{theorem}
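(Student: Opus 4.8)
The plan is to expand the log-posterior objective in~\eqref{MAP-Formulation} via Bayes' rule together with the Markov/chain structure of the virtual system~\eqref{Virtual-System}, and then to match the result term by term against the cost in~\eqref{NMPC-Standard}. Writing $z=\{x_{k:k+H},u_{k:k+H}\}$ for the decision variables, Bayes' rule gives
\begin{align}
\log \p(z|r_{k:k+H}) &= \log p(r_{k:k+H}\,|\,z) + \log p(z) - \log p(r_{k:k+H}),
\end{align}
and the last term, being independent of $z$, drops out of the maximization.

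First I would handle the likelihood. Since $r_t = x_t + v_t$ with $v_t$ white and $v_t\sim\mathcal N(0,R^{-1})$, the $r_t$ are conditionally independent given the states, so
\begin{align}
\log p(r_{k:k+H}\,|\,z) &= \sum_{t=k}^{k+H}\log p(r_t\,|\,x_t) \nonumber\\ &= -\tfrac12\sum_{t=k}^{k+H}(x_t-r_t)^{\top}R(x_t-r_t) + c_1,
\end{align}
where $c_1$ gathers the Gaussian normalization constants, which depend only on $R$. Next I would handle the prior $p(z)$ using the factorization along~\eqref{Virtual-System}. The state recursion~\eqref{dynamic-eq} is deterministic, so it contributes a degenerate (Dirac) factor that forces every $z$ with nonzero posterior density to satisfy the dynamics~\eqref{nn-dynamics-discrete} exactly --- i.e.\ the dynamics becomes a hard equality constraint of the estimation problem --- while the input recursion $u_{t+1}=w_t$ with $w_t$ white and $w_t\sim\mathcal N(0,Q^{-1})$ contributes, independently across $t$, the term $\log p(u_{t+1}) = -\tfrac12 u_{t+1}^{\top}Q u_{t+1}+\text{const}$. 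Collecting these, and choosing the boundary priors so the index ranges line up with~\eqref{NMPC-Standard} (a flat prior on $x_k$ and $u_k\sim\mathcal N(0,Q^{-1})$), we get $\log p(z) = -\tfrac12\sum_{t=k}^{k+H}u_t^{\top}Q u_t + c_2$ on the set where~\eqref{nn-dynamics-discrete} holds and $-\infty$ off it. Combining the two pieces, on the feasible set
\begin{align}
\log \p(z|r_{k:k+H}) &= -\tfrac12\left[\sum_{t=k}^{k+H}(x_t-r_t)^{\top}R(x_t-r_t)+u_t^{\top}Q u_t\right] \nonumber\\ &\quad + c_1 + c_2 - \log p(r_{k:k+H}),
\end{align}
so maximizing the left-hand side over $z$ is equivalent --- by monotonicity of $s\mapsto -\tfrac12 s+\text{const}$ --- to minimizing the NMPC cost in~\eqref{NMPC-Standard} subject to~\eqref{nn-dynamics-discrete}. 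Hence the two problems have the same optimizers once~\eqref{inequality-constraints} is neglected, which is the claim; the argument follows the same lines as~\cite{Askari:ACC:2022}.

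I expect the main obstacle to be the rigorous treatment of the deterministic dynamics~\eqref{dynamic-eq}: the joint prior $p(z)$ is not a genuine density on the full space $\{x_{k:k+H},u_{k:k+H}\}$ but a degenerate measure supported on the manifold cut out by~\eqref{nn-dynamics-discrete}. One clean way around this is to reparametrize, taking $(x_k,u_{k:k+H})$ as the free variables and defining $x_{k+1},\dots,x_{k+H}$ through~\eqref{nn-dynamics-discrete}, so that~\eqref{MAP-Formulation} becomes an unconstrained MAP problem in the free variables that is literally the negative NMPC cost plus a constant; alternatively one reads~\eqref{MAP-Formulation} as a constrained MAP problem from the start. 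Either way one must check that the two sums have matching index ranges, which is exactly what pins down the boundary priors above. A secondary, minor point is that MAP returns a posterior mode rather than the full posterior, so the asserted equivalence is precisely at the level of optimizers, as stated; the remaining steps --- expanding the Gaussian log-densities and verifying that $c_1,c_2$ do not depend on $z$ --- are routine.
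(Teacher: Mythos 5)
Your proposal is correct and follows essentially the same route as the paper's proof: factor the posterior via Bayes' rule and the Markov structure of the virtual system, let the deterministic dynamics~\eqref{dynamic-eq} collapse the transition factors into a hard constraint, and match the Gaussian log-densities of $p(r_t\,|\,x_t)$ and $p(u_t)$ to the quadratic tracking and actuation terms of~\eqref{NMPC-Standard}. Your additional care about the degenerate (Dirac) prior, the boundary priors on $x_k$ and $u_k$, and the reparametrization in the free variables $(x_k,u_{k:k+H})$ is a sound refinement of steps the paper handles implicitly by stating that the dynamics are deterministic and $x_k$ is known.
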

\begin{proof}
By the Markovian property of~\eqref{Virtual-System} and Bayes' rule, we have
\begin{align*}
\p(&x_{k:k+H}, u_{k:k+H}   |  r_{k:k+H}) \nonumber \\
& \qquad \propto \prod_{t=k}^{k+H}\p(r_t | x_t)p(u_t)\cdot \prod_{t=k}^{k+H-1} \p(x_{t+1} | x_{t}, u_{t})p(x_k).\nonumber
\end{align*}
As~\eqref{dynamic-eq} is deterministic  and the initial state $x_k$ is known, the above reduces to
\begin{align*}
\p(x_{k:k+H}, u_{k:k+H}   |&  r_{k:k+H}) \propto \prod_{t=k}^{k+H}\p(r_t | x_t)p(u_t).
\end{align*}
This implies that 
\begin{align*}
\log \p(x_{k:k+H}, u_{k:k+H} | r_{k:k+H})& \\
\propto \sum_{t=k}^{k+H} & \log \p(r_t | x_t) + \log p(u_t).
\end{align*}
As  $\p(r_t  |   x_t)  \sim \mathcal{N}(x_t, R^{-1})$ and $ p(u_t)  \sim \mathcal{N}(0, Q^{-1})$, we have
\begin{align*}
\log \p(r_t | x_t) &\propto -(x_t - r_t)^{\top}R(x_t - r_t), \\
\log p(u_t) &\propto -u_t^{\top}Qu_t.
\end{align*}
Combining the above, we see that problem~\eqref{MAP-Formulation} can be expressed as
\begin{align*}
\min_{x_{k:k+H}, u_{k:k+H}} \sum_{t=k}^{k+H} (x_t - r_t)^{\top}R(x_t - r_t) + u_t^{\top}{Q}u_t,
\end{align*}
which is the same as problem~\eqref{NMPC-Standard} without the constraint in~\eqref{inequality-constraints}. The theorem is thus proven.
\end{proof}

Theorem 1 indicates the viability of leveraging Bayesian estimation to solve the original motion planning problem in~\eqref{NMPC-Standard}, whereby one would estimate the best control actions and motion plans based on the planning objectives.

Proceeding further, we must embed~\eqref{MAP-Formulation} with the constraint  in~\eqref{inequality-constraints}. To this end, we introduce a virtual measurement $z_t$:  
\begin{align}\label{Virtual-measurement-barrier-function}
z_t = \phi(g(x_t, u_t)) + \eta_t.
\end{align}
Here, $\phi(\cdot)$ is a barrier function, which outputs zero when the constraint is satisfied and infinity otherwise, and $\eta_t \sim  \mathcal{N}(0,R_\eta)$  is a small additive noise. It is seen that $z_t$ should take zero to ensure constraint satisfaction. We choose $\phi(\cdot)$ to be a softplus function for the sake of numerical computation:
\begin{align}\label{barrier-function}
\phi(s) = \frac{1}{\alpha}\ln \left( 1 + \exp(\beta s)\right),
\end{align}
where the tunable parameters $\alpha$ and $\beta$ determine how strict to implement the constraint.

For notational simplicity, we rewrite the virtual system in~\eqref{Virtual-System} along with $z_t$ in the following augmented form:
\begin{subequations}\label{Virtual-System-Compact}
\begin{align}\label{Virtual-System-Compact-a}
\bar{x}_{t+1} &= \bar{f}(\bar{x}_t) + \bar{w}_t, \\
{\bar y}_t &=\bar h(  \bar{x}_t ) +\bar v_t, \label{Virtual-System-Compact-b}
\end{align}
\end{subequations}
where 
\begin{align*}
\bar{x}_t&=\begin{bmatrix}x_{t} \\ u_{t}\end{bmatrix}, \: \bar{y}_t= \begin{bmatrix} r_t \\ z_t\end{bmatrix}, \: \bar{w}_t=\begin{bmatrix}0 \\ w_t\end{bmatrix}, \\  \bar{v}_t =\begin{bmatrix}v_t \\ \eta_t\end{bmatrix}, \: \; \bar f(&\bar x_t)=\begin{bmatrix}f(x_t, u_t) \\ 0\end{bmatrix},  \quad \bar h(\bar x_t)=\begin{bmatrix}x_t \\ \phi(g\left( x_t, u_t\right))\end{bmatrix}. 
\end{align*}

 Given~\eqref{Virtual-System-Compact}, we will focus on Bayesian state estimation to find out the best $\bar x^{\ast}$ in line of~\eqref{MAP-Formulation} by considering $\p(\bar x_{k:k+H} | \bar y_{k:k+H})$.  This is known as a smoothing problem.  While various computational algorithms are available to address it, we are particularly interested in ensemble Kalman estimation because of their high computational efficiency and performance in handling nonlinearity. In the sequel, we will show a sequential EnKS algorithm to enable computationally fast estimation for~\eqref{Virtual-System-Compact}. 

\subsection{Sequential Ensemble Kalman Smoothing}\label{sec:EnKS}
Ensemble Kalman estimation traces its origin to Monte Carlo simulation. Characteristically, it approximates concerned probability distributions by ensembles of samples and updates the ensembles in a Kalman-update manner when new data are available. Combining the essences of Monte Carlo sampling and Kalman filtering, it can handle strong nonlinearity in high-dimensional state spaces and compute fast. There are different ways to do smoothing in terms of ensemble Kalman estimation. In this study, we choose a sequential EnKS approach in~\cite{Evensen:MWR:2000} to address the estimation problem in Section~\ref{sec:esimation-NMPC-setup}. This approach sequentially updates the ensembles for all the past states at every time upon the arrival of new data.  As such, it performs smoothing in a single forward pass to save much computation compared to the two pass smoothing methods, thus advantageous for our motion planning estimation task. 

Consider $\p(\bar x_{k:t} | \bar y_{k:t})$ for an arbitrary $t \in \left[k, \ldots,k+H\right]$. For notational convenience, we drop the subscript $k$ and define $\mathcal{X}_t = \bar x_{k:t}$ and $\mathcal{Y}_t = \bar y_{k:t}$. Then, using Bayes' rule, we have the following recursive relation:
\begin{equation}\label{Forward-Smoothing-Recursion}
\p(\mathcal{X}_t  |  \mathcal{Y}_t) \propto \p(\bar y_t  |  \bar{x}_t)\p(\bar{x}_{t}  |  \bar{x}_{t-1})\p(\mathcal{X}_{t-1} | \mathcal{Y}_{t-1}).
\end{equation}
At time $t-1$, we assume

\begin{align}\label{Gaussian_assumption}
p & \left(\left. \begin{bmatrix}
\mathcal{X}_t \\ \bar{y}_{t}
\end{bmatrix} \, \right| \, \mathcal{Y}_{t-1}\right) \sim \nonumber \\  
 & \qquad \quad \; \; \; \; \mathcal{N}\left(\begin{bmatrix}
\mathcal {\hat X}_{t|t-1} \\ \hat{ \bar y}_{t|t-1}
\end{bmatrix}, \begin{bmatrix}
\mathcal{P}_{t|t-1} ^{\mathcal{X}} & \mathcal{P}_{t|t-1}^{\mathcal{X}, \bar{y}} \\ \left(\mathcal{P}_{t|t-1}^{\mathcal{X},\bar {y}} \right)^\top & \mathcal{P}_{t|t-1} ^{\bar{y}}
\end{bmatrix}\right),
\end{align} 
where $\mathcal{\hat X}$ and $\hat{\bar y}$ are the means for $\mathcal{X}$ and $\bar{y}$, respectively, and $\mathcal{P}$ represents the covariances (or cross-covariances). It then follows that
\begin{align}\label{conditional-dist}
\p( \mathcal{X}_{t} | \mathcal{Y}_t) \sim \mathcal{N} \left(\hat {\mathcal{X}}_{t|t} , \mathcal{P}^{\mathcal{X}}_{t|t}\right),
\end{align}
where 
\begin{subequations}\label{KF-update}
\begin{align}\label{KF-update-a}
\hat{\mathcal{X}}_{t|t} &= \hat{\mathcal{X}}_{t|t-1} + \mathcal{P}_{t|t-1}^{\mathcal{X},\bar y} \left( \mathcal{P}_{t|t-1} ^{\bar y} \right)^{-1} \left(\bar y_t -\hat{\bar y}_{t|t-1} \right),\\ \label{KF-update-b}
\mathcal{P}_{t|t}^{\mathcal{X}} &= \mathcal{P}_{t|t-1}^{\mathcal{X}} - \mathcal{P}_{t|t-1}^{\mathcal{X},\bar y} \left( \mathcal{P}_{t|t-1} ^{\bar y} \right)^{-1} \left(\mathcal{P}_{t|t-1}^{\mathcal{X},\bar y} \right)^\top.
\end{align}
\end{subequations}
When $t$ moves from $k$ to $k+H$, the smoothing process is completed for the considered horizon.

To realize the above procedure, we approximately represent $\p(\mathcal{X}_{t-1} | \mathcal{Y}_{t-1})$ by an ensemble of samples $\mathcal{X}^i_{t-1|t-1}$ for $i=1,\ldots, N$ with mean $\mathcal{\hat X}_{t-1|t-1}$ and covariance $\mathcal{P}^{\mathcal{X}}_{t-1|t-1}$. Then, we  pass $\bar x_{t-1|t-1}^i$ through~\eqref{Virtual-System-Compact-a} to obtain the ensemble that approximates $\p(\bar x_t | \mathcal{Y}_{t-1} )$:
\begin{align}\label{predicted-state}
\bar{x}_{t|t-1}^i = \bar{f}(\bar{x}_{t-1|t-1}^i) + \bar{w}_{t-1}^i, \quad i = 1, \ldots, N,
\end{align}
where $\bar{w}_{t-1}^i$ are samples drawn from $\mathcal{N}(0, \mathrm{diag}(0, Q^{-1}))$. 
The corresponding sample mean and covariance can be calculated as
\begin{subequations}\label{Prediction-sample-stats}
\begin{align}
\hat{\bar x}_{t|t-1} &= \frac{1}{N}\sum_{i=1}^N \bar x_{t|t-1}^i,\label{Prediction-sample-mean}\\
\mathcal{P}_{t|t-1}^{\bar x} &= \frac{1}{N-1}\sum_{i=1}^N \left( \bar{x}_{t|t-1}^i - \hat{\bar x}_{t|t-1}\right)\left( \bar{x}_{t|t-1}^i - \hat{\bar{x}}_{t|t-1}\right)^{\top}, \label{Prediction-sample-cov} 
\end{align}
\end{subequations}
Then, we concatenate $\bar x_{t|t-1}^i$ and $\hat{\bar x}_{t|t-1}$ with $\mathcal{X}_{t-1|t-1}^i$ and $\hat{\mathcal{X}}_{t-1|t-1}$, respectively,   to create $\mathcal{X}_{t|t-1}^i \leftarrow (\mathcal{X}_{t-1|t-1}^i, \bar{x}_{t|t-1}^i)$ and $\hat{\mathcal{X}}_{t|t-1} \leftarrow (\hat{\mathcal{X}}_{t-1|t-1}, \hat{\bar{x}}_{t|t-1})$. The covariance for $\p(\mathcal{X}_{t} | \mathcal{Y}_{t-1})$ is constructed as

\begin{align}\label{Prediction-Cov}
\mathcal{P}_{t|t-1}^{\mathcal{X}} = \begin{bmatrix}
\mathcal{P}_{t-1|t-1} ^{\mathcal{X}} & \mathcal{P}_{t|t-1}^{\mathcal{X}, \bar{x}} \\ \left(\mathcal{P}_{t|t-1}^{\mathcal{X},\bar {x}} \right)^\top & \mathcal{P}_{t|t-1} ^{\bar{x}}
\end{bmatrix},
\end{align}
where
\begin{align*}
\mathcal{P}_{\scaleto{t|t-1}{6pt}}^{\mathcal{X}, \bar x} &= \frac{1}{N-1}\sum_{i=1}^N \left( \mathcal{X}_{\scaleto{t-1|t-1}{6pt}}^i - \hat{\mathcal{X}}_{\scaleto{t-1|t-1}{6pt}}\right)\left( \bar{x}_{\scaleto{t|t-1}{6pt}}^i - \hat{\bar{x}}_{\scaleto{t|t-1}{6pt}}\right)^{\top}. 
\end{align*}
To approximate $\p(\bar y_{t} | \mathcal{Y}_{t-1})$, we construct an ensemble of samples by 
\begin{align}\label{ensemble-measurement}
\bar y_{t|t-1}^i = \bar{h}(\bar{x}_{t|t-1}^i) + \bar{v}_t^i, \quad i = 1, \ldots, N,
\end{align}
where $\bar{v}_t^i$ are drawn from $\mathcal{N}\left(0, \mathrm{diag}\left(R^{-1}, R_{\eta}\right)\right)$. The sample mean, and covariance of the measurement ensemble is 
\begin{subequations}
\begin{align}\label{measurement-sats}
\hat{\bar y}_{t|t-1} &= \frac{1}{N}\sum_{i=1}^N \bar{y}_{t|t-1}^i, \\
\mathcal{P}^{\bar y}_{t|t-1} &= \frac{1}{N-1}\sum_{i=1}^N \left(\bar{y}_{t|t-1}^i - \hat{\bar y}_{t|t-1}\right)\left(\bar{y}_{t|t-1}^i - \hat{\bar y}_{t|t-1}\right)^{\top}.\label{covariance_1}
\end{align}
\end{subequations}
\begin{algorithm}[H]
\fontsize{9}{9}
  \caption{\EnKMP: the motion planner based on   NMPC realized by EnKS.} \label{NMPC-EnKS}
  \begin{algorithmic}[1]
\State Set up the NMPC motion planning problem as in~\eqref{NMPC-Standard} 

\State Convert the problem to its Bayesian estimation counterpart as in~\eqref{Virtual-System-Compact}

\For{$k=1, \ldots, T$}


\For{$t = k,\ldots, k+H$}

\If{$t = k$}
  \If{$k = 1$}
    \State Draw samples $\bar x_k^i \sim p(\bar x_k)$, $i = 1, \ldots, N$
  \Else
    \State Warmstart $\bar{x}_k^i$ using $\bar{x}_{t+1}^i$ from $\mathcal{X}_{k+H|k+H}^i$ 
    \Statex \qquad \qquad \quad \; \: at time $k-1$
  \EndIf
\EndIf

\State Generate $\bar{x}_{t|t-1}^i$ using~\eqref{predicted-state} 
\State Compute $\hat{\bar x}_{t|t-1}$ and $\mathcal{P}^{\bar{x}}_{t|t-1}$ via~\eqref{Prediction-sample-mean}-\eqref{Prediction-sample-cov}  
\State Concatenate $\mathcal{X}_{t|t-1}^i \leftarrow (\mathcal{X}_{t-1|t-1}^i, \bar{x}_{t|t-1}^i)$ 
\State Concatenate $\hat{\mathcal{X}}_{t|t-1} \leftarrow (\hat{\mathcal{X}}_{t-1|t-1}, \hat{\bar{x}}_{t|t-1})$ 
\State Compute $\mathcal{P}^{\mathcal{X}}_{t|t-1}$ via~\eqref{Prediction-Cov}
\State Generate $\bar y_{t|t-1}^i$ using~\eqref{ensemble-measurement} 
\State Compute $\hat{\bar y}_{t|t-1}$ and $\mathcal{P}^{\bar y}_{t|t-1}$ via~\eqref{measurement-sats}-\eqref{covariance_1}
\State Compute $\mathcal{P}^{\mathcal{X}, \bar y}_{t|t-1}$ via~\eqref{cross-cov}
\State Obtain updated samples $\mathcal{X}_{t|t}^i$ using~\eqref{ensemble-update}
\State Compute $\hat{\mathcal{X}}_{t|t}$ and $\mathcal{P}^{\mathcal{X}}_{t|t}$ via ~\eqref{update-mean}-\eqref{update-cov}

\EndFor

\State Extract the estimated control $u_k^\ast$ from $\hat{\mathcal{X}}_{k+H|k+H}$

\EndFor

\end{algorithmic}
\label{EnKS-NMPC-MP}
\end{algorithm}
\vspace{-0.5em}
\noindent Then, the cross-covariance between $\mathcal{X}_{t}$ and $\bar y_t$ is found by
\begin{align}\label{cross-cov}
\mathcal{P}^{\mathcal{X},\bar y}_{t|t-1} &= \frac{1}{N-1}\sum_{i=1}^N \left( \mathcal{X}_{t|t-1}^i - \hat{\mathcal{X}}_{t|t-1}\right)\left(\bar y_{t|t-1}^i - \hat{\bar y}_{t|t-1}\right)^{\top}.
\end{align}
Based on~\eqref{KF-update}, $\mathcal{X}_{t|t-1}^i$ can be updated individually as
\begin{align}\label{ensemble-update}
\mathcal{X}_{t|t}^i = \mathcal{X}_{t|t-1}^i + \mathcal{P}_{t|t-1}^{\mathcal{X},\bar y} \left(\mathcal{P}_{t|t-1} ^{\bar y} \right)^{-1} \left(\bar{y}_t- \bar{y}_{t|t-1}^i \right). 
\end{align}
Finally, the smoothed estimate and the associated covariance are given by
\begin{subequations}\label{trajectory-estimate}
\begin{align}
\hat{\mathcal{X}}_{t|t} &= \frac{1}{N}\sum_{i=1}^N \mathcal{X}_{t|t}^i, \label{update-mean} \\ 
\mathcal{P}^{\mathcal{X}}_{t|t} &=   \frac{1}{N-1}\sum_{i=1}^N \left( \mathcal{X}_{t|t}^i - \hat{\mathcal{X}}_{t|t}\right)    \left( \mathcal{X}_{t|t}^i - \hat{\mathcal{X}}_{t|t}\right)^{\top}. \label{update-cov}
\end{align}
\end{subequations}

The EnKS is outlined in~\eqref{predicted-state}-\eqref{trajectory-estimate}. This approach is characterized by sequential computation in one pass. Compared to two-pass smoothers in the literature, it presents higher computational efficiency and thus is a favorable choice for addressing our estimation problem for motion planning. Note that the execution of~\eqref{Prediction-sample-cov},~\eqref{Prediction-Cov}, and~\eqref{update-cov} can be skipped to speed up the computation further if one has no interest in computing the covariances for the quantification of the uncertainty in estimation.  Based on the outlined EnKS approach, we can solve the NMPC-based motion planning problem and propose the \EnKMP as summarized in Algorithm~\ref{EnKS-NMPC-MP}.
\begin{figure}[t!]
\includegraphics[width=0.49\textwidth, trim={10cm 0.5cm 10cm 1cm},clip]{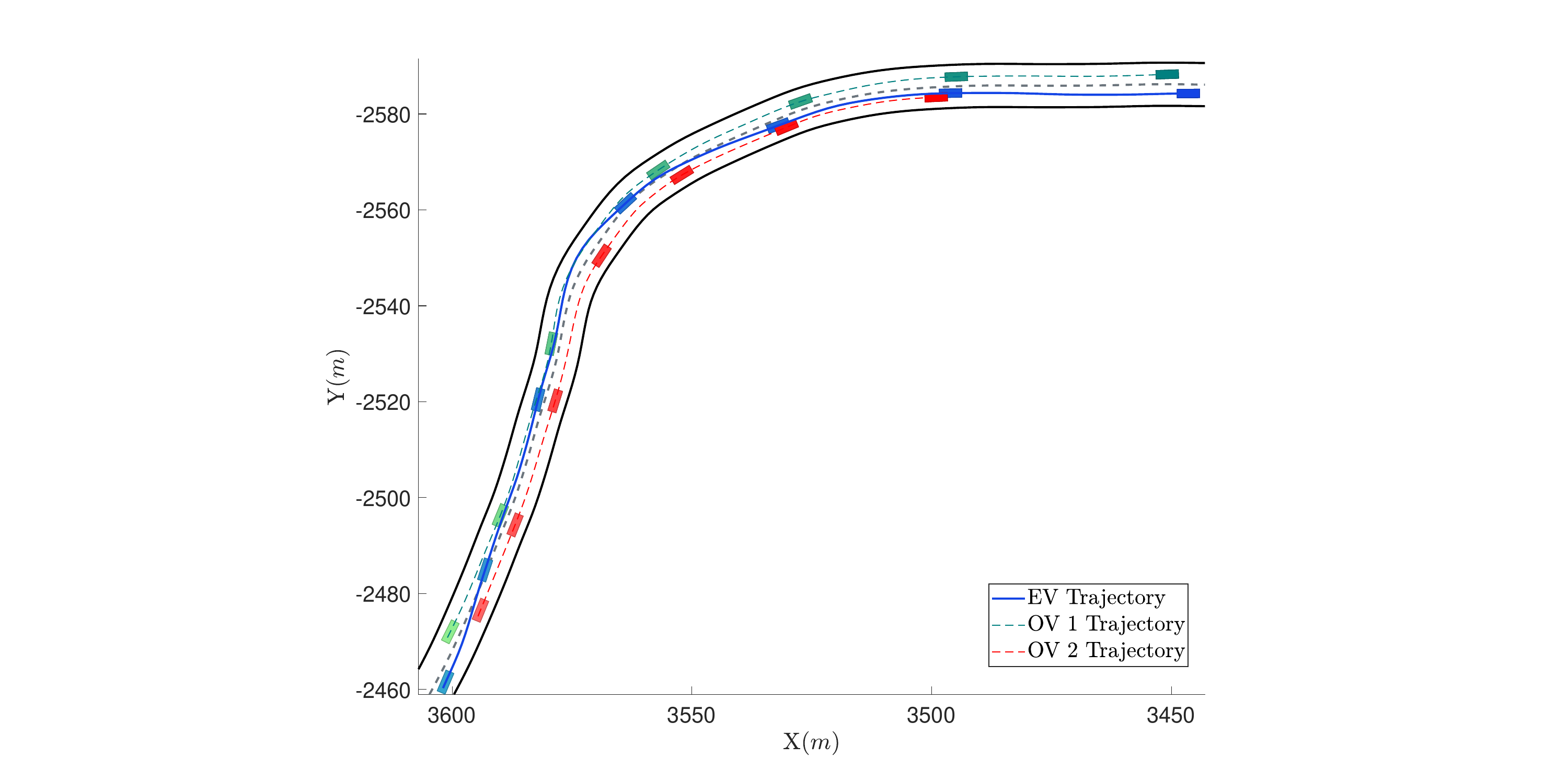}
\centering
 \caption{Vehicle trajectories and positions during the simulation. The EV  performing EnKS motion planning is denoted in blue. The other green and red vehicles are OVs. \iffalse The solid blue line represents the trajectory of the EV, while the dashed lines show the trajectories of the OVs. \fi The color gradient from light to dark represents the vehicle's position from the past to the future.}
 \label{fig:Trajectory}
 \vspace{-1.5em}
\end{figure}


\begin{figure*}[t]
	    \centering
    \subfloat[\centering ]{{\includegraphics[trim={4cm 8cm 3.6cm 9.1cm},clip,width=5cm]{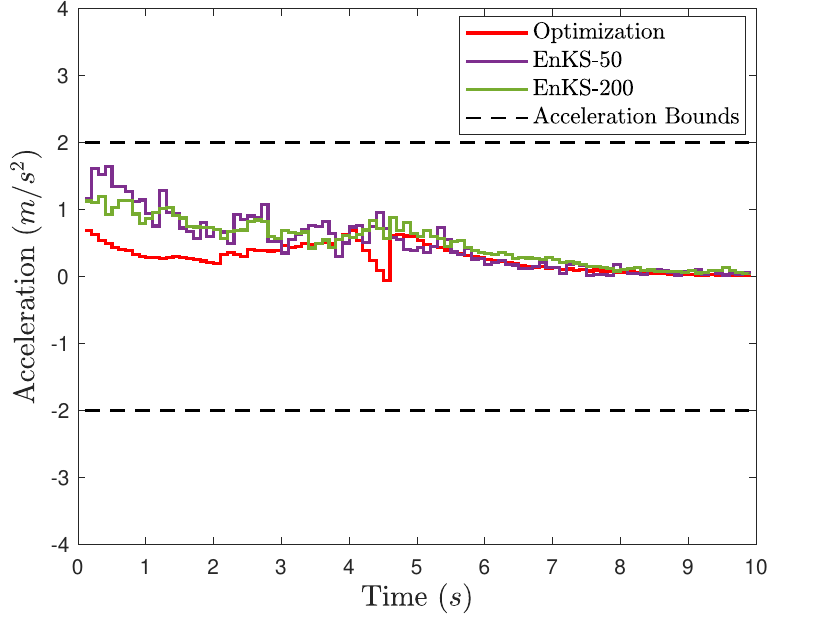} }\label{fig:controls-a}}
    \subfloat[\centering ]{{\includegraphics[trim={4cm 8cm 3.6cm 9.1cm},clip,width= 5cm]{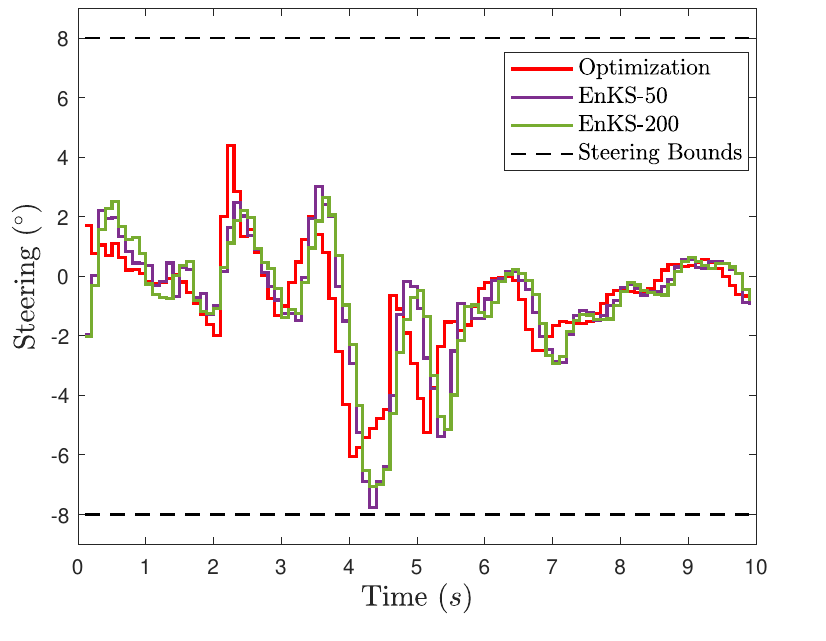} }\label{fig:controls-b}}
    \subfloat[\centering ]{{\includegraphics[trim={4cm 8cm 3.8cm 9.1cm},clip,width=4.9cm]{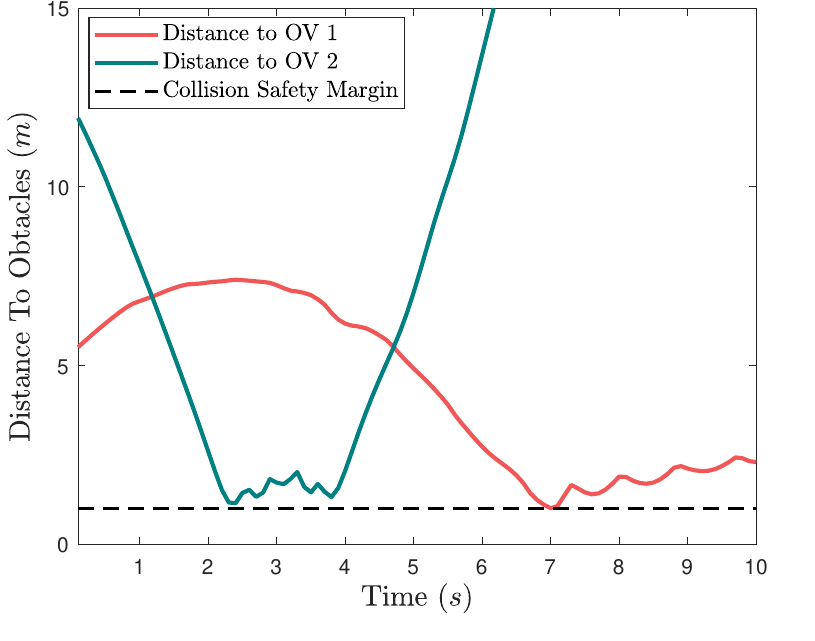} }\label{fig:controls-c}}
    \caption{The control profile and constraint satisfaction by the EV running optimization and \EnKMP. (a) Acceleration control profiles with respective bounds. (b) Steering control profiles with respective bounds. (c) Distance between the EV and OVs when $N=200$ with a safety margin of 1 m.}
    \label{fig:Controls}
    \vspace{-1.7em}
\end{figure*}

\section{Numerical Simulation}\label{numerical-sim}

In this section, we apply the \EnKMP to motion planning when the EV is set to overtake the slow-moving OVs on a curved road. The trained neural network consists of two hidden layers with 128 neurons in each layer and is trained using the Adam optimizer. We generated the training data using the single-track bicycle model in~\cite{Rajmani:Springer:2012}. The sampling period $\Delta t$ is $0.1 \, \textrm{s}$.  

 \begin{figure*}[t!]
        \centering
    \subfloat[\centering ]{{\includegraphics[trim={3cm 8cm 2.5cm 8.2cm},clip,width=6cm]{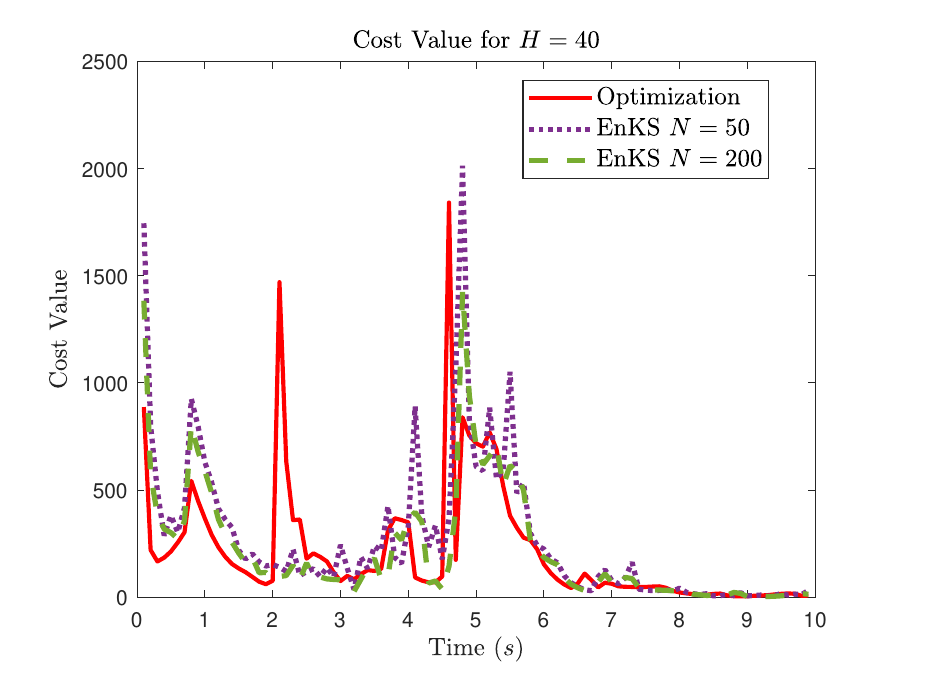} }\label{fig:Cost-H40}} 
    \subfloat[\centering ]{{\includegraphics[trim={3.5cm 8cm 2.5cm 8.2cm},clip,width= 5.9cm]{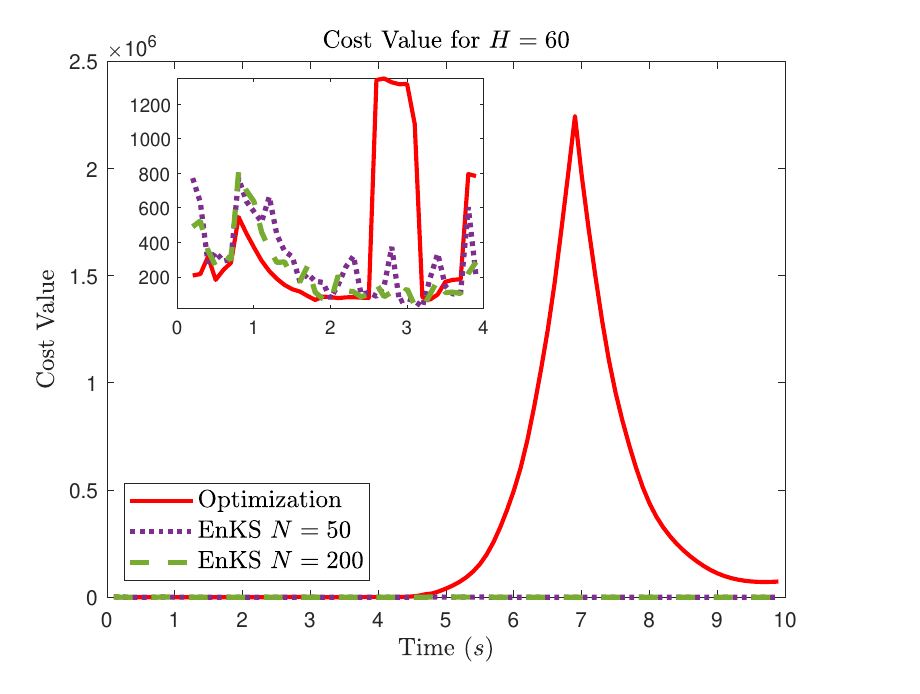} }\label{fig:Cost-H60}} 
    \caption{Cost comparison over the simulation time between optimization and EnKS with different particle numbers for two horizon lengths. (a) $H = 40$. (b) $H = 60$.}
    \label{fig:Cost}
    \vspace{-1.7em}
\end{figure*}

In Fig.~\ref{fig:Trajectory}, we depict the movement of the blue-colored EV and two OVs in green and red. All the vehicles start from the left and travel to the right along the curved road. The motion planner must help the EV   find the best path and actuation to accomplish the overtaking task safely on the curved road. The EV can always find a safe trajectory forward using the proposed \EnKMP, as shown in Fig.~\ref{fig:Controls}. We also evaluate the effects of the ensemble size $N$ on the estimated control profiles, as illustrated in Figs.~\ref{fig:controls-a}-\ref{fig:controls-b}. The increase in sample size results in more accurate estimation and smoother control profiles while meeting the driving constraints, as is seen. Further, the distances between the EV and OVs are plotted in Fig.~\ref{fig:controls-c} when $N=200$, depicting the capability of the \EnKMP to always maintain a safe distance between the EV and OVs.     

As discussed earlier, the \EnKMP is designed for fast computation. We thus evaluate its computational performance as well as its cost performance through a comparison with gradient-based optimization. The simulations run on a workstation equipped with an Intel i9-10920X 3.5 GHz CPU and 128 GB of RAM.  For gradient-based optimization, we used MATLAB's 2023a MPC Toolbox with the default interior-point algorithm and computed the expressions of the gradients offline for the optimization run. 

In Fig.~\ref{fig:Cost}, we compare the cost value for the \EnKMP and gradient-based optimization motion planner for $H = 40$ and $H=60$. When $H = 40$, the \EnKMP has comparable cost performance with gradient-based optimization. On the contrary, when $H = 60$, gradient-based optimization fails due to the increase in nonconvexity and size of the optimization problem, while the \EnKMP is successful because of its sequential formulation and ability to handle high-dimensional nonlinear Bayesian state estimation problems.  
 
In Table~\ref{Table: Computation-Comp}, we report the numerical comparison between both approaches when the prediction horizon $H=40$ and $ 60$ and when the \EnKMP's ensemble size $N$ is $50$, $100$ and $200$. Additionally, we have numerically summarized the cost attained by each approach during the simulation time interval $0\leq k \leq 500$. The relative cost achieved by the \EnKMP compared to gradient optimization when $H  = 40$ and $N = 200$ is $2 \%$ higher than optimization, which is acceptable considering the computational efficiency. In all the cases, the \EnKMP has remarkable computational efficiency, which is faster by orders of magnitude compared with gradient optimization, having a relative decrease of over $99 \%$. Again, note that gradient optimization fails when $H=60$ and requires almost unmanageable computational times even with shorter prediction horizons from a practical viewpoint. By contrast, the \EnKMP still demonstrates very fast speed, showing its desired capability of enabling long prediction horizons and good scalability with the number of optimization variables involved in planning.
\begin{table}[t!]\large\centering 
\caption{Numerical comparison of EnKS and optimization}
\resizebox{0.5\textwidth}{!}{
 \begin{tabular}{c  c  c  c  c  c}
\toprule
\makecell[c]{Horizon \\ ($H$)} & Method & Total Cost & \makecell[c]{ Average \\ Computation \\ Time (s)} & \makecell[c]{Relative \\ Cost Change \\ (\%)} & \makecell[c]{Relative \\ Computation \\ Time Change\\ (\%)}  \\
\midrule
 \multirow{5}{0.5cm}{\makecell[c]{$40$}} & Optimization & $14,680$ & $54.92$ & --- & ---  \\
 & EnKS ($N = 50$) & $17,853$ & $0.158$ & $21.6$ & $-99.71$ \\
 & EnKS ($N = 100$) & $16,126$ & $0.242$ & $9.85$ & $-99.56$ \\
 & EnKS ($N = 200$)& $15,102$ & $0.366$ & $2.88$ & $-99.61$ \\
 \cmidrule(l){1-6}
 \multirow{5}{0.5cm}{\makecell[c]{$60$}} & Optimization & --- & $143.3$ & --- & ---  \\
 & EnKS ($N = 50$) & $18,845$ & $0.250$ & --- &  $-99.83$\\
 & EnKS ($N = 100$) & $17,655$ & $0.385$ & --- &  $-99.73$\\
 & EnKS ($N = 200$) & $15,798$ & $0.561$ & --- &  $-99.61$\\
\bottomrule
\end{tabular}
 }
\label{Table: Computation-Comp}
\vspace{-1.2em}
\end{table}

\section{Conclusion}\label{Sec:Conclusion}


Autonomous vehicles rely on high-performance motion planning to achieve their full potential. In this paper, we consider the design of a motion planner that uses neural networks to predict a vehicle's dynamics and exploits NMPC to compute safe, optimal trajectories. However, the planner will suffer from high computational complexity if adopting gradient-based solvers, as the neural network model will make the underlying optimization problem highly nonlinear and nonconvex. We show that a sequential EnKS approach can approximately implement the NMPC-based motion planner with very fast computation.  What makes this design possible is the reformulation of NMPC as a Bayesian estimation problem. In the simulations, the proposed \EnKMP shows itself to be faster than gradient-based optimization by orders of magnitude.

\balance
\bibliographystyle{IEEEtran}
\bibliography{ACC24-Bib}

\begin{thebibliography}{10}
\providecommand{\url}[1]{#1}
\csname url@samestyle\endcsname
\providecommand{\newblock}{\relax}
\providecommand{\bibinfo}[2]{#2}
\providecommand{\BIBentrySTDinterwordspacing}{\spaceskip=0pt\relax}
\providecommand{\BIBentryALTinterwordstretchfactor}{4}
\providecommand{\BIBentryALTinterwordspacing}{\spaceskip=\fontdimen2\font plus
\BIBentryALTinterwordstretchfactor\fontdimen3\font minus
  \fontdimen4\font\relax}
\providecommand{\BIBforeignlanguage}[2]{{%
\expandafter\ifx\csname l@#1\endcsname\relax
\typeout{** WARNING: IEEEtran.bst: No hyphenation pattern has been}%
\typeout{** loaded for the language `#1'. Using the pattern for}%
\typeout{** the default language instead.}%
\else
\language=\csname l@#1\endcsname
\fi
#2}}
\providecommand{\BIBdecl}{\relax}
\BIBdecl

\bibitem{Woldeamanuel:RTE:2018}
M.~Woldeamanuel and D.~Nguyen, ``Perceived benefits and concerns of autonomous
  vehicles: An exploratory study of millennials’ sentiments of an emerging
  market,'' \emph{Research in Transportation Economics}, vol.~71, pp. 44--53,
  2018.

\bibitem{Dijkstra:NM:1959}
E.~W. Dijkstra, ``A note on two problems in connexion with graphs,''
  \emph{Numerische Mathematik}, vol.~1, no.~1, pp. 269--271, 1959.

\bibitem{Hart:TSSC:1968}
P.~E. Hart, N.~J. Nilsson, and B.~Raphael, ``A formal basis for the heuristic
  determination of minimum cost paths,'' \emph{IEEE Transactions on Systems
  Science and Cybernetics}, vol.~4, no.~2, pp. 100--107, 1968.

\bibitem{LaValle:ARR:1998}
S.~M. LaValle, ``Rapidly-exploring random trees: a new tool for path
  planning,'' \emph{The annual research report}, 1998.

\bibitem{Kavraki:TRA:1996}
L.~Kavraki, P.~Svestka, J.-C. Latombe, and M.~Overmars, ``Probabilistic
  roadmaps for path planning in high-dimensional configuration spaces,''
  \emph{IEEE Transactions on Robotics and Automation}, vol.~12, no.~4, pp.
  566--580, 1996.

\bibitem{Turri:ITSC:2013}
V.~Turri, A.~Carvalho, H.~E. Tseng, K.~H. Johansson, and F.~Borrelli, ``Linear
  model predictive control for lane keeping and obstacle avoidance on low
  curvature roads,'' in \emph{16th International IEEE Conference on Intelligent
  Transportation Systems}, 2013, pp. 378--383.

\bibitem{Murgovski:CDC:2015}
N.~Murgovski and J.~Sjöberg, ``Predictive cruise control with autonomous
  overtaking,'' in \emph{54th IEEE Conference on Decision and Control}, 2015,
  pp. 644--649.

\bibitem{Ji:TVT:2017}
J.~Ji, A.~Khajepour, W.~W. Melek, and Y.~Huang, ``Path planning and tracking
  for vehicle collision avoidance based on model predictive control with
  multiconstraints,'' \emph{IEEE Transactions on Vehicular Technology},
  vol.~66, no.~2, pp. 952--964, 2017.

\bibitem{Askari:ACC:2021}
I.~Askari, S.~Zeng, and H.~Fang, ``Nonlinear model predictive control based on
  constraint-aware particle filtering/smoothing,'' in \emph{American Control
  Conference}, 2021, pp. 3532--3537.

\bibitem{Askari:ACC:2022}
I.~Askari, B.~Badnava, T.~Woodruff, S.~Zeng, and H.~Fang, ``Sampling-based
  nonlinear {MPC} of neural network dynamics with application to autonomous
  vehicle motion planning,'' in \emph{American Control Conference}, 2022, pp.
  2084--2090.

\bibitem{Draeger:CSM:1995}
A.~Draeger, S.~Engell, and H.~Ranke, ``Model predictive control using neural
  networks,'' \emph{IEEE Control Systems Magazine}, vol.~15, no.~5, pp. 61--66,
  1995.

\bibitem{Piche:NIPS:1999}
S.~Pich\'{e}, J.~Keeler, G.~Martin, G.~Boe, D.~Johnson, and M.~Gerules,
  ``Neural network based model predictive control,'' in \emph{Proceedings of
  the 12th International Conference on Neural Information Processing Systems},
  1999, p. 1029–1035.

\bibitem{Williams:TRO:2018}
G.~Williams, P.~Drews, B.~Goldfain, J.~M. Rehg, and E.~A. Theodorou,
  ``Information-theoretic model predictive control: Theory and applications to
  autonomous driving,'' \emph{IEEE Transactions on Robotics}, vol.~34, no.~6,
  pp. 1603--1622, 2018.

\bibitem{Spielberg:TCST:2022}
N.~A. Spielberg, M.~Brown, and J.~C. Gerdes, ``Neural network model predictive
  motion control applied to automated driving with unknown friction,''
  \emph{IEEE Transactions on Control Systems Technology}, vol.~30, no.~5, pp.
  1934--1945, 2022.

\bibitem{Hermansdorfer:CIST:2020}
L.~Hermansdorfer, R.~Trauth, J.~Betz, and M.~Lienkamp, ``End-to-end neural
  network for vehicle dynamics modeling,'' in \emph{6th IEEE Congress on
  Information Science and Technology}, 2020, pp. 407--412.

\bibitem{Spielberg:SR:2019}
N.~A. Spielberg, M.~Brown, N.~R. Kapania, J.~C. Kegelman, and J.~C. Gerdes,
  ``Neural network vehicle models for high-performance automated driving,''
  \emph{Science Robotics}, vol.~4, no.~28, 2019.

\bibitem{Nagabandi:ICRA:2018}
A.~{Nagabandi}, G.~{Kahn}, R.~S. {Fearing}, and S.~{Levine}, ``Neural network
  dynamics for model-based deep reinforcement learning with model-free
  fine-tuning,'' in \emph{Proceedings of the IEEE International Conference on
  Robotics and Automation}, 2018, pp. 7559--7566.

\bibitem{Broad:arXiv:2018}
A.~Broad, I.~Abraham, T.~Murphey, and B.~Argall, ``Structured neural network
  dynamics for model-based control,'' \emph{arXiv:1808.01184}, 2018.

\bibitem{Bunning:PMLR:2021}
F.~B\"unning, A.~Schalbetter, A.~Aboudonia, M.~H. de~Badyn, P.~Heer, and
  J.~Lygeros, ``Input convex neural networks for building {MPC},'' in
  \emph{Proceedings of the 3rd Conference on Learning for Dynamics and
  Control}, ser. Proceedings of Machine Learning Research, vol. 144, 2021, pp.
  251--262.

\bibitem{Fang:JAS:2018}
H.~Fang, N.~Tian, Y.~Wang, M.~Zhou, and M.~A. Haile, ``Nonlinear
  \uppercase{B}ayesian estimation: from \uppercase{K}alman filtering to a
  broader horizon,'' \emph{IEEE/CAA Journal of Automatica Sinica}, vol.~5,
  no.~2, pp. 401--417, 2018.

\bibitem{Evensen:MWR:2000}
G.~Evensen and P.~J. van Leeuwen, ``An ensemble {K}alman smoother for nonlinear
  dynamics,'' \emph{Monthly Weather Review}, vol. 128, no.~6, pp. 1852 -- 1867,
  2000.

\bibitem{Rajmani:Springer:2012}
R.~Rajamani, \emph{Vehicle Dynamics and Control}.\hskip 1em plus 0.5em minus
  0.4em\relax Springer US, 2012.

\end{thebibliography}
\end{document}